%%%%%%%%%%%%%%%%%%%%%%%%%%%%%%%%%%%%%%%%%%%%%%%%%%%%%%%%%%%%%%%%%%%%%%%%

%%% LaTeX Template for AAMAS-2026 (based on sample-sigconf.tex)
%%% Prepared by the AAMAS-2026 Publication Chairs based on the version from AAMAS-2025. 

%%%%%%%%%%%%%%%%%%%%%%%%%%%%%%%%%%%%%%%%%%%%%%%%%%%%%%%%%%%%%%%%%%%%%%%%

%%% Start your document with the \documentclass command.

%%% == IMPORTANT ==
%%% Use the first variant below for the final paper (including author information).
%%% Use the second variant below to anonymize your submission (no author information shown).
%%% For further information on anonymity and double-blind reviewing, 
%%% please consult the call for paper information
%%% https://cyprusconferences.org/aamas2026/submission-instructions/

%%%% For anonymized submission, use this
%\documentclass[sigconf,anonymous]{aamas} 

%%%% For camera-ready, use this
\documentclass[sigconf]{aamas}

%%% Load required packages here (note that many are included already).

\usepackage{balance} % for balancing columns on the final page
\usepackage{amsthm}
% ===== Algorithms =====
\usepackage{algorithm}          % 浮动体（带 caption/label）
\usepackage{algpseudocode}      % algorithmicx 的伪代码环境（支持 \State, \For, ...）

% 可选：把关键字换成更顺眼的样式
\algrenewcommand{\algorithmicrequire}{Input:}
\algrenewcommand{\algorithmicensure}{Output:}
% 可选：把 while/for 的 End 标记省略（IEEE 常见风格）
\algtext*{EndWhile}\algtext*{EndFor}\algtext*{EndIf}

% 若与 IEEEtran 一起用，建议：
% \floatname{algorithm}{Algorithm} %（默认即为 Algorithm，可按需改中文名）

% 定义 theorem-like 环境
\newtheorem{theorem}{Theorem}

% 自定义 assumption 环境

%%%%%%%%%%%%%%%%%%%%%%%%%%%%%%%%%%%%%%%%%%%%%%%%%%%%%%%%%%%%%%%%%%%%%%%%

%%% AAMAS-2026 copyright block (do not change!)

\setcopyright{ifaamas}
\acmConference[AAMAS '26]{Proc.\@ of the 25th International Conference
on Autonomous Agents and Multiagent Systems (AAMAS 2026)}{May 25 -- 29, 2026}
{Paphos, Cyprus}{C.~Amato, L.~Dennis, V.~Mascardi, J.~Thangarajah (eds.)}
\copyrightyear{2026}
\acmYear{2026}
\acmDOI{}
\acmPrice{}
\acmISBN{}

%%%%%%%%%%%%%%%%%%%%%%%%%%%%%%%%%%%%%%%%%%%%%%%%%%%%%%%%%%%%%%%%%%%%%%%%

%%% == IMPORTANT ==
%%% Use this command to specify your submission number.
%%% In anonymous mode, it will be printed on the first page.

%\acmSubmissionID{<<submission id>>}

%%% Use this command to specify the title of your paper.

\title[AAMAS-2026 Formatting Instructions]{Mean Field Game-Based Interactive Trajectory Planning Using Physics-Inspired Unified Potential Fields}

%%% Provide names, affiliations, and email addresses for all authors.

\author{Zhen Tian}
\affiliation{
  \institution{School of Computing Science, University of Glasgow}
  \city{Glasgow}
  \country{United Kingdom}}
\email{Tian.Zhen@glasgow.ac.uk}

\author{Fujiang Yuan}
\affiliation{
  \institution{College of Mechanical Engineering, Chongqing University of Technology, Chongqing, 400054, China.}
  \city{Chongqing}
      \country{China}}
\email{yuanfujiang@ctbu.edu.cn}

\author{Chunhong Yuan}
\affiliation{
  \institution{Laboratory of Intelligent Home Appliances, College of Science and Technology, Ningbo University, Ningbo 315300, Zhejiang, China.}
  \city{Zhejiang}
      \country{China}}
\email{ChYuan@stud.kpfu.ru}

\author{Yanhong Peng}
\affiliation{
  \institution{College of Mechanical Engineering, Chongqing University of Technology, Chongqing, 400054, China.}
  \city{Chongqing}
      \country{China}}
\email{yhpeng@nagoya-u.jp}
%%% Use this environment to specify a short abstract for your paper.

\begin{abstract}
Interactive trajectory planning in autonomous driving must balance safety, efficiency, and scalability under heterogeneous driving behaviors. Existing methods often face high computational cost or rely on external safety critics. To address this, we propose an Interaction-Enriched Unified Potential Field (IUPF) framework that fuses style-dependent benefit and risk fields through a physics-inspired variational model, grounded in mean field game theory. The approach captures conservative, aggressive, and cooperative behaviors without additional safety modules, and employs stochastic differential equations to guarantee Nash equilibrium with exponential convergence. Simulations on lane changing and overtaking scenarios show that IUPF ensures safe distances, generates smooth and efficient trajectories, and outperforms traditional optimization and game-theoretic baselines in both adaptability and computational efficiency.
\end{abstract}

%%% Use this command to specify a few keywords describing your work.
%%% Keywords should be separated by commas.

\keywords{Autonomous driving, trajectory planning, mean field games, potential fields, interactive planning, heterogeneous driving behaviors, variational methods, stochastic optimal control}

%%%%%%%%%%%%%%%%%%%%%%%%%%%%%%%%%%%%%%%%%%%%%%%%%%%%%%%%%%%%%%%%%%%%%%%%

%%% Include any author-defined commands here.
         
\newcommand{\BibTeX}{\rm B\kern-.05em{\sc i\kern-.025em b}\kern-.08em\TeX}

%%%%%%%%%%%%%%%%%%%%%%%%%%%%%%%%%%%%%%%%%%%%%%%%%%%%%%%%%%%%%%%%%%%%%%%%

\begin{document}

%%% The following commands remove the headers in your paper. For final 
%%% papers, these will be inserted during the pagination process.

\pagestyle{fancy}
\fancyhead{}

%%% The next command prints the information defined in the preamble.

\maketitle 

%%%%%%%%%%%%%%%%%%%%%%%%%%%%%%%%%%%%%%%%%%%%%%%%%%%%%%%%%%%%%%%%%%%%%%%%

\section{Introduction}
\begin{figure}[t]
    \centering
    \includegraphics[width=0.95\linewidth]{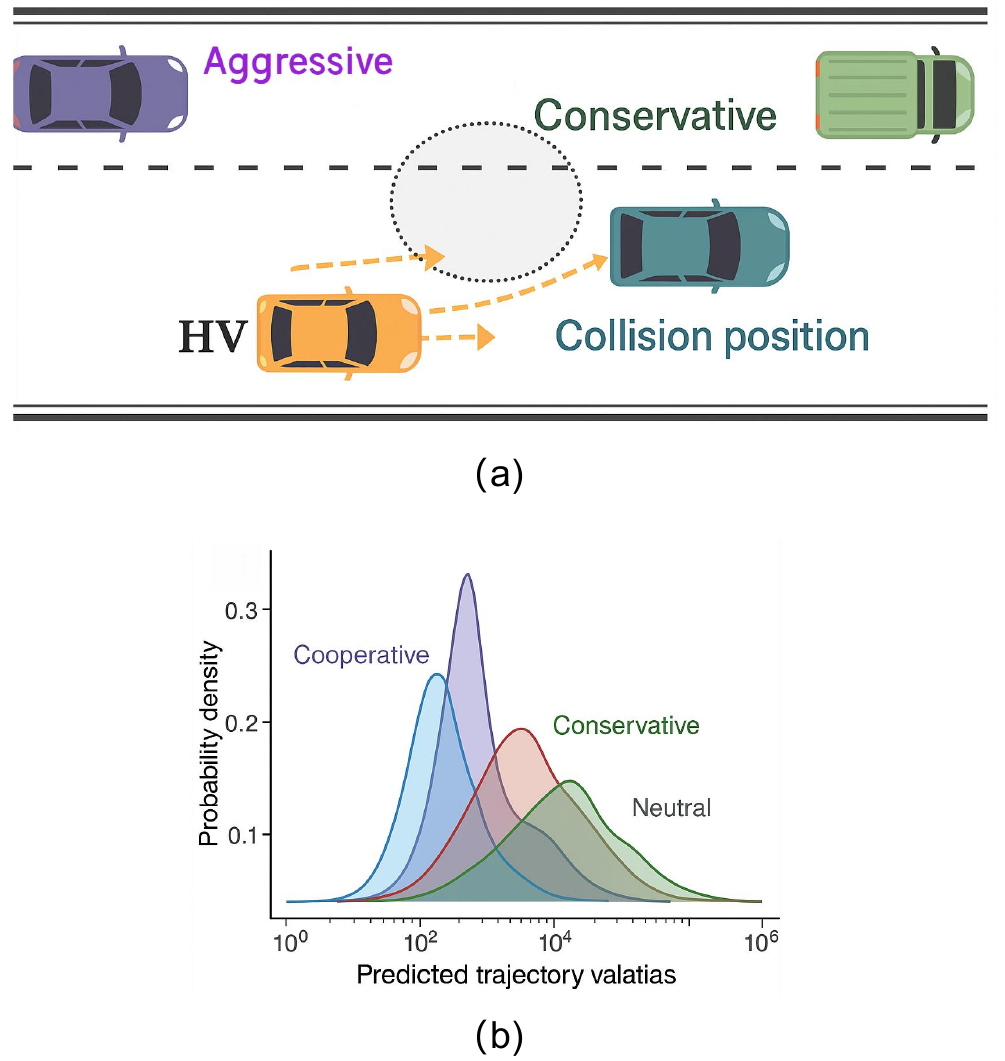}
    \caption{(a) Interaction between the HV and surrounding vehicles with different driving styles, where both aggressive and conservative behaviors influence the potential collision region. 
    (b) Distribution of predicted trajectory tendencies across different driver styles, showing the heterogeneity in risk and interaction patterns.}
    \label{fig:interaction_styles}
\end{figure}
Autonomous driving has made significant progress in recent years, with perception and localization technologies becoming increasingly mature. However, ensuring safe and efficient operation in interactive traffic environments still relies heavily on the quality of the decision-making and trajectory planning modules~\cite{yuan2025bio, 9750986, lin2024dpl, lin2024enhanced, lin2025slam2}. During driving, planning must generate smooth, safe, and dynamically feasible trajectories by balancing multiple objectives such as efficiency, safety, and comfort~\cite{liu2021dynamic, li2025efficient}. This balancing act becomes particularly challenging in dense interactive scenarios, where the host vehicle (HV) must negotiate with surrounding agents whose intentions and driving styles vary significantly. As illustrated in Fig.~\ref{fig:interaction_styles}, our framework explicitly models heterogeneous driving styles. Fig.~\ref{fig:interaction_styles}(a) shows how the host vehicle (HV) interacts with aggressive and conservative vehicles in a lane-change scenario, while Fig.~\ref{fig:interaction_styles}(b) demonstrates the probabilistic distribution of predicted trajectories under different styles. 
This highlights the necessity of incorporating style-awareness into mean-field decision making.

\subsection{Game-theoretic and Mean-Field Approaches in Interactive Driving}
Beyond classical rule-based and optimization-based methods~\cite{9729796, 9964675, li2025adaptive, lin2025multi, lin2025safety}, a growing body of research has adopted game-theoretic reasoning to address interactive decision making. By formulating driving as a multi-agent game, each vehicle is treated as a player optimizing its own utility, allowing the capture of strategic interactions in lane changing, merging, and intersection scenarios~\cite{wang2022hybrid, 9560056}. Nash equilibrium and Stackelberg games have been widely explored to model interactions; however, these approaches suffer from computational burdens and scalability issues as the number of agents increases~\cite{zhang2023path, 10004211}. 
\begin{figure*}[t]
    \centering
    \includegraphics[width=0.7\linewidth]{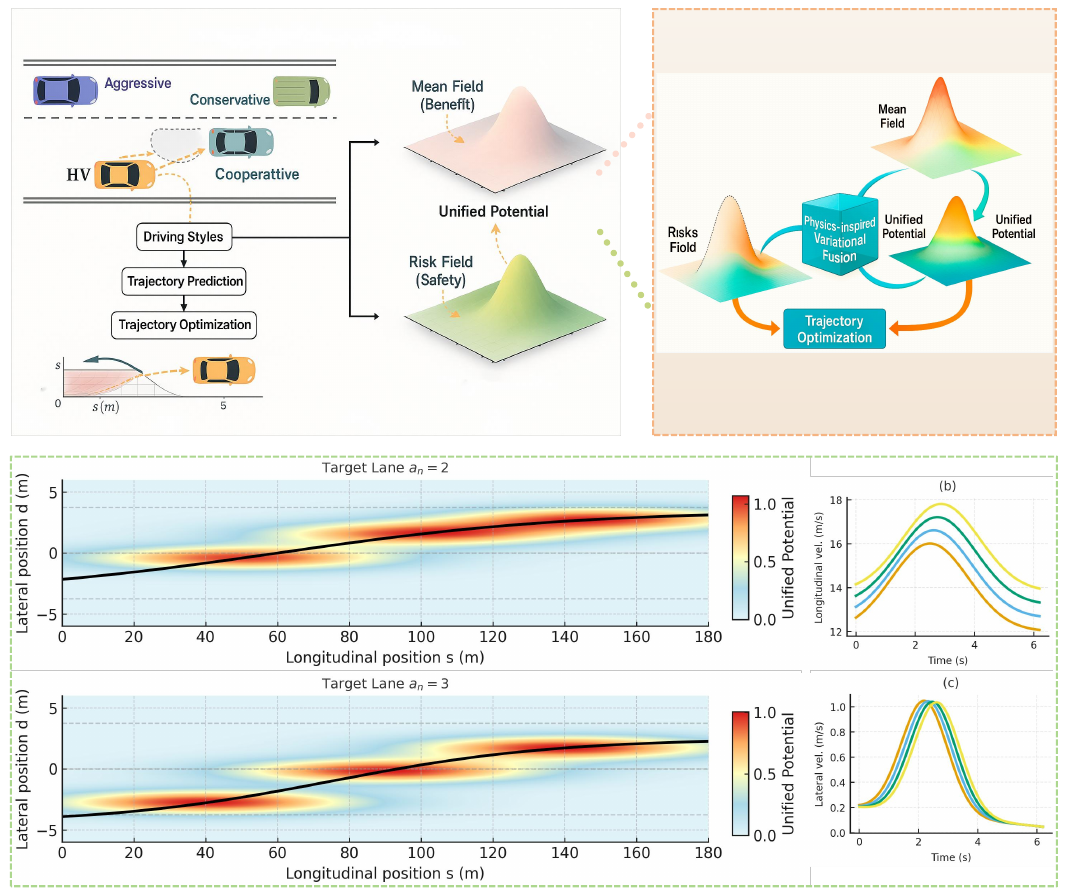}
    \caption{Overview of the proposed IUPF framework.}
    \label{fig:framework_overview}
\end{figure*}
More recently, mean-field game (MFG) theory has been introduced into autonomous driving as a scalable solution to the multi-agent interaction problem~\cite{wang2022hybrid,9560056,zhang2023path}. By approximating the aggregate effect of a large population with a distribution, MFGs provide a tractable framework that maintains game-theoretic consistency while reducing complexity. This approach allows interactive planning without explicitly modeling each surrounding vehicle. However, most existing MFG-based methods either assume homogeneous driving behaviors, or require additional complex safety supervision (e.g., TTC thresholds, control barrier certificates, or multi-layer critics) to ensure collision avoidance~\cite{zheng2025mean,zheng2025enhanced}. These additional modules not only increase design complexity but may also introduce conservativeness or inconsistencies in dynamic environments.

\subsection{Limitations of Existing Approaches}
Despite their advances, current planning methods still face several limitations. Classical approaches such as graph search, sampling-based methods, artificial potential fields, and optimization frameworks provide useful baselines but often encounter issues of high computation cost, suboptimal behavior in dense traffic, or sensitivity to handcrafted objectives~\cite{9942280, 9756640, 9535367, 9830995, 9967433, 9415170, 10048483, 10285563}. Game-theoretic models capture interaction but do not scale well and require explicit modeling of all participants, while mean-field methods improve scalability but tend to oversimplify heterogeneous driver behaviors. Moreover, many of these methods rely on external safety critics such as TTC- or DRAC-based checks~\cite{zhang2023path, yang2023decoupled}, resulting in heavy supervision overhead. This limits their applicability to highly dynamic and interactive real-world traffic. Furthermore, most validations focus on simplified settings with few vehicles or constrained scenarios, leaving open the question of robustness and generalizability in complex multi-vehicle environments~\cite{zhang2023path, 10004211}.

In light of these challenges, this paper introduces an IUPF framework that integrates the advantages of mean-field modeling with physically inspired risk shaping. Instead of separating interaction modeling from safety supervision, the proposed framework combines a mean-field benefit field with a risk field into a single unified potential, enabling interaction-rich yet safety-aware decision making without the need for multiple external critics. Driver heterogeneity is incorporated through style-dependent field parameters, ensuring the planner can capture both conservative and aggressive behaviors within the same formulation. This integration reduces the reliance on handcrafted supervision while maintaining safety margins, resulting in smooth, efficient, and interpretable maneuvers such as lane changes and overtaking. Comparative evaluations demonstrate that the proposed IUPF achieves more robust and efficient decision making compared to existing optimization- and game-based methods, confirming its promise for scalable deployment in interactive autonomous driving. The main contributions of this work are summarized as follows:
\begin{itemize}
    \item We propose an IUPF that integrates mean-field benefits and risk shaping into a single potential, enabling interaction-rich yet safety-aware decision making.
    \item Driver heterogeneity is explicitly captured through style-dependent field parameters, allowing the framework to handle conservative and aggressive behaviors within a unified formulation.
    \item The framework achieves smooth, efficient, and interpretable maneuvers without relying on multiple external safety critics, and demonstrates superior robustness in complex lane-change scenarios.
\end{itemize}
%%%%%%%%%%%%%%%%%%%%%%%%%%%%%%%%%%%%%%%%%%%%%%%%%%%%%%%%%%%%%%%%%%%%%%%%

\section{The Proposed Framework}

The proposed framework, as shown in Fig.~\ref{fig:framework_overview}, provides a unified representation for decision making in interactive driving scenarios. The HV is required to interact with surrounding vehicles that may exhibit different driving styles such as aggressive, conservative, or cooperative. To address this challenge, we design a two-stage field construction process. First, a Mean Field is established to capture the potential benefits of the surrounding vehicles' motion trends, while a Risk Field models the safety exposure that reflects the likelihood of collisions. These two fields represent complementary aspects of the environment: the Mean Field promotes efficiency and opportunity, whereas the Risk Field emphasizes safety and caution.

The key innovation lies in introducing a physics-inspired variational fusion model that combines the Mean Field and Risk Field into a single Unified Potential Field. This fusion mechanism inherently balances the trade-off between safety and efficiency, avoiding the need for multiple external critics or additional safety heuristics such as explicit time-to-collision checks. As a result, the Unified Potential Field simultaneously encodes both risk and benefit, enabling the planner to adapt to different interaction patterns without additional supervision.

Based on the Unified Potential Field, the trajectory planning process exploits both the field distribution and its gradient to generate feasible maneuvers. In the illustrated lane-change case study, the unified potential highlights safe and beneficial regions along candidate lanes, guiding the host vehicle to select the most suitable path. Furthermore, the velocity profiles confirm that the proposed framework not only ensures safety and efficiency but also maintains smooth control performance across longitudinal and lateral directions. This demonstrates the robustness and adaptability of the approach when handling heterogeneous driving styles in complex scenarios.

%%%%%%%%%%%%%%%%%%%%%%%%%%%%%%%%%%%%%%%%%%%%%%%%%%%%%%%%%%%%%%%%%%%%%%%%
\section{Methodology: Interaction-Enriched Unified Potential Field Framework}
\label{sec:methodology}

This section presents a mathematically rigorous framework for interactive trajectory planning through mean field games and variational potential field fusion.

\subsection{Vehicle Dynamics in Frenet Coordinate System}

We adopt a discrete-time kinematic bicycle model in the Frenet coordinate system. The state vector at time step $k$ is defined as:

\begin{equation}
\mathbf{x}_k = \begin{bmatrix} 
s_k \\ d_k \\ \dot{s}_k \\ \dot{d}_k \\ \ddot{s}_k \\ \ddot{d}_k 
\end{bmatrix} \in \mathbb{R}^6
\label{eq:state_vector}
\end{equation}

where $s_k \in \mathbb{R}$ represents the longitudinal position along the reference path at time $k$, $d_k \in \mathbb{R}$ denotes the lateral displacement from the reference path, $\dot{s}_k, \dot{d}_k \in \mathbb{R}$ are the corresponding longitudinal and lateral velocities, and $\ddot{s}_k, \ddot{d}_k \in \mathbb{R}$ represent the longitudinal and lateral accelerations respectively.

The control input vector is:
\begin{equation}
\mathbf{u}_k = \begin{bmatrix} 
a_{s,k} \\ \omega_{d,k} 
\end{bmatrix} \in \mathcal{U} \subset \mathbb{R}^2
\label{eq:control_vector}
\end{equation}

where $a_{s,k} \in [-a_{\max}, a_{\max}]$ represents the longitudinal jerk (rate of acceleration change) and $\omega_{d,k} \in [-\omega_{\max}, \omega_{\max}]$ denotes the lateral acceleration rate, both subject to physical actuator constraints.

The linearized discrete-time dynamics follow:
\begin{equation}
\mathbf{x}_{k+1} = \mathbf{A} \mathbf{x}_k + \mathbf{B} \mathbf{u}_k + \mathbf{w}_k
\label{eq:dynamics}
\end{equation}

where the system matrices are:
\begin{equation}
\mathbf{A} = \begin{bmatrix}
1 & 0 & \Delta t & 0 & \frac{\Delta t^2}{2} & 0 \\
0 & 1 & 0 & \Delta t & 0 & \frac{\Delta t^2}{2} \\
0 & 0 & 1 & 0 & \Delta t & 0 \\
0 & 0 & 0 & 1 & 0 & \Delta t \\
0 & 0 & 0 & 0 & 1 & 0 \\
0 & 0 & 0 & 0 & 0 & 1
\end{bmatrix}, \quad
\mathbf{B} = \begin{bmatrix}
\frac{\Delta t^3}{6} & 0 \\
0 & \frac{\Delta t^3}{6} \\
\frac{\Delta t^2}{2} & 0 \\
0 & \frac{\Delta t^2}{2} \\
\Delta t & 0 \\
0 & \Delta t
\end{bmatrix}
\label{eq:system_matrices}
\end{equation}

and $\mathbf{w}_k \sim \mathcal{N}(\mathbf{0}, \mathbf{Q})$ represents process noise with covariance matrix:
\begin{equation}
\mathbf{Q} = \sigma_w^2 \begin{bmatrix}
\frac{\Delta t^4}{4} & 0 & \frac{\Delta t^3}{2} & 0 & \frac{\Delta t^2}{2} & 0 \\
0 & \frac{\Delta t^4}{4} & 0 & \frac{\Delta t^3}{2} & 0 & \frac{\Delta t^2}{2} \\
\frac{\Delta t^3}{2} & 0 & \Delta t^2 & 0 & \Delta t & 0 \\
0 & \frac{\Delta t^3}{2} & 0 & \Delta t^2 & 0 & \Delta t \\
\frac{\Delta t^2}{2} & 0 & \Delta t & 0 & 1 & 0 \\
0 & \frac{\Delta t^2}{2} & 0 & \Delta t & 0 & 1
\end{bmatrix}
\label{eq:noise_covariance}
\end{equation}

where $\sigma_w > 0$ denotes the process noise intensity and $\Delta t > 0$ is the sampling time interval.

\subsection{Mean Field Game Formulation with Heterogeneous Driving Styles}

Consider a population of $N$ vehicles, where each vehicle $i \in \{1, 2, \ldots, N\}$ is characterized by its state $\mathbf{x}_t^i \in \mathbb{R}^6$ and driving style $\theta^i \in \Theta$. The driving style space is defined as:

\begin{equation}
\Theta = \{\theta_c, \theta_a, \theta_{co}\} \times [0,1]^3
\label{eq:style_space}
\end{equation}

where $\theta_c$ represents conservative behavior characterized by risk-averse parameters, $\theta_a$ denotes aggressive behavior with efficiency-focused parameters, and $\theta_{co}$ represents cooperative behavior with balanced parameters. The continuous components $[0,1]^3$ parameterize the aggressiveness level, reaction time, and social awareness respectively.

Each vehicle evolves according to the controlled stochastic differential equation:
\begin{equation}
d\mathbf{X}_t^i = \mathbf{f}(\mathbf{X}_t^i, \mathbf{u}_t^i, m_t, \theta^i) dt + \boldsymbol{\Sigma}(\mathbf{X}_t^i, \theta^i) d\mathbf{W}_t^i
\label{eq:sde}
\end{equation}

where $\mathbf{f}: \mathbb{R}^6 \times \mathbb{R}^2 \times \mathcal{P}(\mathbb{R}^6) \times \Theta \to \mathbb{R}^6$ represents the drift coefficient, $\mathbf{u}_t^i \in \mathcal{U}(\theta^i) \subset \mathbb{R}^2$ is the control input constrained by driving style, $m_t \in \mathcal{P}(\mathbb{R}^6)$ denotes the empirical measure of vehicle distribution, $\boldsymbol{\Sigma}: \mathbb{R}^6 \times \Theta \to \mathbb{R}^{6 \times 6}$ is the diffusion coefficient matrix, and $\mathbf{W}_t^i$ are independent 6-dimensional Brownian motions.

The drift coefficient incorporates vehicle dynamics and mean field interaction:
\begin{equation}
\mathbf{f}(\mathbf{x}, \mathbf{u}, m, \theta) = \mathbf{A} \mathbf{x} + \mathbf{B} \mathbf{u} + \boldsymbol{\phi}(\mathbf{x}, m, \theta)
\label{eq:drift}
\end{equation}

where $\boldsymbol{\phi}(\mathbf{x}, m, \theta)$ represents the mean field interaction term:
\begin{equation}
\boldsymbol{\phi}(\mathbf{x}, m, \theta) = \int_{\mathbb{R}^6} \mathbf{K}_\theta(\mathbf{x}, \mathbf{y}) dm(\mathbf{y})
\label{eq:mean_field_interaction}
\end{equation}

with interaction kernel $\mathbf{K}_\theta: \mathbb{R}^6 \times \mathbb{R}^6 \to \mathbb{R}^6$.

\subsection{Style-Dependent Benefit and Risk Field Construction}

For each driving style $\theta \in \Theta$, we define style-specific parameters:
\begin{equation}
\boldsymbol{\alpha}_\theta = \begin{bmatrix}
\alpha_{\theta}^B \\ \alpha_{\theta}^R \\ \lambda_{\theta}^B \\ \lambda_{\theta}^R \\ \sigma_{\theta}^B \\ \sigma_{\theta}^R
\end{bmatrix} \in \mathbb{R}_+^6
\label{eq:style_parameters}
\end{equation}

where $\alpha_{\theta}^B, \alpha_{\theta}^R \in \mathbb{R}_+$ are benefit and risk amplification factors respectively, $\lambda_{\theta}^B, \lambda_{\theta}^R \in \mathbb{R}_+$ represent benefit and risk decay lengths, and $\sigma_{\theta}^B, \sigma_{\theta}^R \in \mathbb{R}_+$ denote benefit and risk spreading parameters.

The benefit field $B: \mathbb{R}^2 \to \mathbb{R}$ is constructed as the solution to the variational problem:
\begin{equation}
B^* = \arg\min_{B \in H^1(\mathbb{R}^2)} \mathcal{E}_B[B, m]
\label{eq:benefit_variational}
\end{equation}

where the energy functional is:
\begin{align}
\mathcal{E}_B[B, m] &= \int_{\mathbb{R}^2} \left[ \frac{1}{2} \|\nabla B\|^2 + \mathcal{V}_B(B, m) + \frac{\lambda_B}{2} B^2 \right] d\mathbf{r} \label{eq:benefit_energy}
\end{align}

with the interaction potential:
\begin{equation}
\mathcal{V}_B(B, m) = -\sum_{\theta \in \Theta} \alpha_{\theta}^B \int_{\mathbb{R}^6} G_{\lambda_{\theta}^B}(\|\pi(\mathbf{x}) - \mathbf{r}\|) B(\mathbf{r}) dm_\theta(\mathbf{x})
\label{eq:benefit_potential}
\end{equation}

where $\pi: \mathbb{R}^6 \to \mathbb{R}^2$ is the projection operator $\pi(\mathbf{x}) = [s, d]^T$ that extracts the position components, $G_\lambda(r) = \exp(-r/\lambda)$ represents the exponential kernel with decay length $\lambda$, and $m_\theta$ denotes the measure restricted to vehicles with driving style $\theta$.

Similarly, the risk field $R: \mathbb{R}^2 \to \mathbb{R}$ solves:
\begin{equation}
R^* = \arg\min_{R \in H^1(\mathbb{R}^2)} \mathcal{E}_R[R, m]
\label{eq:risk_variational}
\end{equation}

with energy functional:
\begin{align}
\mathcal{E}_R[R, m] &= \int_{\mathbb{R}^2} \left[ \frac{1}{2} \|\nabla R\|^2 + \mathcal{V}_R(R, m) + \frac{\lambda_R}{2} R^2 \right] d\mathbf{r} \label{eq:risk_energy}
\end{align}

and interaction potential:
\begin{equation}
\mathcal{V}_R(R, m) = \sum_{\theta \in \Theta} \alpha_{\theta}^R \int_{\mathbb{R}^6} H_{\lambda_{\theta}^R}(\|\pi(\mathbf{x}) - \mathbf{r}\|, \|\dot{\pi}(\mathbf{x})\|) R(\mathbf{r}) dm_\theta(\mathbf{x})
\label{eq:risk_potential}
\end{equation}

where $H_{\lambda,v}(r,v) = \exp(-r/\lambda) \cdot (1 + v^2/v_{\max}^2)$ accounts for both proximity and relative velocity effects.

\subsection{Physics-Inspired Variational Fusion via Cahn-Hilliard Dynamics}

The unified potential field $\Phi: \mathbb{R}^2 \to \mathbb{R}$ emerges from a modified Cahn-Hilliard equation that models the phase separation between benefit-dominated and risk-dominated regions:

\begin{equation}
\frac{\partial \Phi}{\partial \tau} = \nabla^2 \left[ \epsilon^2 \nabla^2 \Phi - W'(\Phi) + \chi(B^*, R^*, \Phi) \right]
\label{eq:cahn_hilliard}
\end{equation}

where $\tau \geq 0$ is the artificial time parameter, $\epsilon > 0$ represents the interface width parameter, $W(\Phi) = \frac{1}{4}(\Phi^2 - 1)^2$ denotes the double-well potential, and $\chi(B, R, \Phi)$ is the coupling function defined below.

The coupling function incorporates nonlinear interactions:
\begin{align}
\chi(B, R, \Phi) &= \gamma_1 \bar{B}^{\alpha_1} - \gamma_2 \bar{R}^{\alpha_2} + \gamma_3 \bar{B} \bar{R} \sin(\pi \bar{B}) \cos(\pi \bar{R}) \nonumber \\
&\quad + \gamma_4 \Phi (\bar{B}^2 - \bar{R}^2) + \gamma_5 \nabla \bar{B} \cdot \nabla \bar{R}
\label{eq:coupling_function}
\end{align}

where $\bar{B} = (B - B_{\min})/(B_{\max} - B_{\min})$ and $\bar{R} = (R - R_{\min})/(R_{\max} - R_{\min})$ are normalized fields, $\gamma_i > 0$ represent coupling parameters, and $\alpha_1, \alpha_2 > 1$ control the nonlinearity strength.

The steady-state solution $\Phi^*$ satisfies the Euler-Lagrange equation:
\begin{equation}
\epsilon^2 \nabla^2 \Phi^* - W'(\Phi^*) + \chi(B^*, R^*, \Phi^*) = C
\label{eq:euler_lagrange}
\end{equation}

for some constant $C \in \mathbb{R}$.

\subsection{Optimal Control via Forward-Backward Stochastic Differential Equations}

Each vehicle $i$ seeks to minimize the expected cost:
\begin{align}
J^i[\mathbf{u}^i, m] &= \mathbb{E}\left[ \int_0^T L^i(\mathbf{X}_t^i, \mathbf{u}_t^i, m_t, \theta^i) dt + \Psi^i(\mathbf{X}_T^i, m_T, \theta^i) \right]
\label{eq:cost_functional}
\end{align}

where the running cost is:
\begin{align}
L^i(\mathbf{x}, \mathbf{u}, m, \theta) &= -\Phi(\pi(\mathbf{x})) + \frac{1}{2} \mathbf{u}^T \mathbf{R}_\theta \mathbf{u} + \ell_\theta(\mathbf{x}, m)
\label{eq:running_cost}
\end{align}

where $-\Phi(\pi(\mathbf{x}))$ rewards high unified potential regions, $\mathbf{R}_\theta = \text{diag}(r_{s,\theta}, r_{d,\theta}) \succ 0$ is the style-dependent control cost matrix, and $\ell_\theta(\mathbf{x}, m)$ represents style-specific constraint penalties.

The optimal control $\mathbf{u}^{i,*}$ satisfies the Forward-Backward Stochastic Differential Equation (FBSDE) system:

\textbf{Forward equation (state dynamics):}
\begin{equation}
d\mathbf{X}_t^i = \mathbf{f}(\mathbf{X}_t^i, \mathbf{u}_t^{i,*}, m_t, \theta^i) dt + \boldsymbol{\Sigma}(\mathbf{X}_t^i, \theta^i) d\mathbf{W}_t^i
\label{eq:forward_sde}
\end{equation}

\textbf{Backward equation (adjoint process):}
\begin{align}
d\mathbf{Y}_t^i &= -\left[ \nabla_{\mathbf{x}} \mathcal{H}^i(\mathbf{X}_t^i, \mathbf{u}_t^{i,*}, \mathbf{Y}_t^i, \mathbf{Z}_t^i, m_t, \theta^i) \right] dt + \mathbf{Z}_t^i d\mathbf{W}_t^i \label{eq:backward_sde}
\end{align}

\textbf{Terminal condition:}
\begin{equation}
\mathbf{Y}_T^i = \nabla_{\mathbf{x}} \Psi^i(\mathbf{X}_T^i, m_T, \theta^i)
\label{eq:terminal_condition}
\end{equation}

\textbf{Optimality condition:}
\begin{equation}
\mathbf{u}_t^{i,*} = -\mathbf{R}_{\theta^i}^{-1} \mathbf{B}^T \mathbf{Y}_t^i
\label{eq:optimal_control}
\end{equation}

where the Hamiltonian is:
\begin{align}
\mathcal{H}^i(\mathbf{x}, \mathbf{u}, \mathbf{y}, \mathbf{z}, m, \theta) &= L^i(\mathbf{x}, \mathbf{u}, m, \theta) + \mathbf{y}^T \mathbf{f}(\mathbf{x}, \mathbf{u}, m, \theta) \nonumber \\
&\quad + \frac{1}{2} \text{tr}(\mathbf{z}^T \boldsymbol{\Sigma}(\mathbf{x}, \theta))
\label{eq:hamiltonian}
\end{align}

\subsection{Convergence Analysis and Nash Equilibrium}

\begin{theorem}[Existence and Uniqueness of Nash Equilibrium]
\label{thm:nash_existence}
Under the following assumptions:
\begin{enumerate}
\item The cost functions $L^i$ and $\Psi^i$ are convex in $(\mathbf{x}, \mathbf{u})$ and Lipschitz in $m$
\item The drift $\mathbf{f}$ and diffusion $\boldsymbol{\Sigma}$ satisfy linear growth and Lipschitz conditions
\item The unified potential $\Phi^*$ is bounded and twice continuously differentiable
\end{enumerate}
there exists a unique Nash equilibrium $(\mathbf{u}^{1,*}, \ldots, \mathbf{u}^{N,*}, m^*)$ to the mean field game.
\end{theorem}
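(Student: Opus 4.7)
The plan is to reduce the mean field game to a fixed-point problem on the space of measure flows $m \in \mathcal{C}([0,T];\mathcal{P}_2(\mathbb{R}^6))$ and then apply a Schauder/Banach-type argument. First I would freeze a candidate mean field flow $m$ and solve, for each driving style $\theta$, the resulting single-agent optimal control problem. Assumption (1) (convexity of $L^i$ and $\Psi^i$ in $(\mathbf{x},\mathbf{u})$) together with the linear-quadratic structure of Section~\ref{sec:methodology} makes the Hamiltonian~\eqref{eq:hamiltonian} strictly convex in $\mathbf{u}$, so the pointwise minimizer~\eqref{eq:optimal_control} is well-defined and smooth in the adjoint state; using assumption (2) and standard Pardoux--Peng estimates, the decoupled FBSDE \eqref{eq:forward_sde}--\eqref{eq:terminal_condition} then admits a unique adapted solution $(\mathbf{X}^{i,m}, \mathbf{Y}^{i,m}, \mathbf{Z}^{i,m})$ with $L^2$ bounds depending continuously on $m$ in the Wasserstein-$2$ metric.

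Next I would define the fixed-point operator $\Gamma : m \mapsto \mathrm{Law}(\mathbf{X}^{\cdot,m})$, where the law is obtained by averaging the closed-loop processes across the population distribution over styles $\theta \in \Theta$. Existence of a Nash equilibrium reduces to existence of a fixed point of $\Gamma$. I would verify that $\Gamma$ maps a weakly compact, convex subset of $\mathcal{C}([0,T];\mathcal{P}_2(\mathbb{R}^6))$ into itself via uniform moment bounds obtained from Gr\"onwall estimates on the forward SDE, and that it is continuous in the Wasserstein-$2$ topology by standard SDE stability results combined with assumption (3) (boundedness and $C^2$-regularity of $\Phi^*$), which lets me propagate Lipschitz constants through the running cost~\eqref{eq:running_cost}. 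Schauder's fixed-point theorem then produces a Nash equilibrium $m^*$. Uniqueness follows by a Lasry--Lions monotonicity argument: subtracting the FBSDEs associated with two putative equilibria $m^{*,1}, m^{*,2}$ and pairing the resulting linearized system with the differences $\mathbf{X}^{*,1}-\mathbf{X}^{*,2}$ and $\mathbf{Y}^{*,1}-\mathbf{Y}^{*,2}$, strict convexity in $(\mathbf{x},\mathbf{u})$ together with the monotonicity of $\mathbf{x}\mapsto -\Phi(\pi(\mathbf{x}))$ in $m$ (inherited from the benefit/risk kernel structure) forces the cross-terms to have a definite sign and collapses the squared $L^2$ distance to zero.

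The hard part will be verifying that the unified potential $\Phi^*$, defined implicitly by the Cahn--Hilliard Euler--Lagrange equation~\eqref{eq:euler_lagrange}, depends Lipschitz-continuously on $m$ in the Wasserstein metric, since both the Lasry--Lions step and the continuity of $\Gamma$ rest on this. This requires two pieces: Lipschitz dependence of the variational solutions $B^*, R^*$ on $m$, which follows from strong convexity of $\mathcal{E}_B$ and $\mathcal{E}_R$ (quantified by $\lambda_B,\lambda_R>0$) combined with Lipschitz dependence of the interaction potentials $\mathcal{V}_B, \mathcal{V}_R$ on $m$ via the bounded exponential kernels; and an implicit-function-type estimate for~\eqref{eq:euler_lagrange}, where the double-well term $W'(\Phi)=\Phi^3-\Phi$ is only locally monotone. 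The latter is the genuine obstruction to a global uniqueness statement: I would handle it either by restricting attention to the stable phase branch on which $W''(\Phi^*)>0$, or by imposing a smallness condition on the coupling constants $\gamma_i$ in~\eqref{eq:coupling_function} so that the mapping $(B^*,R^*)\mapsto \Phi^*$ becomes a contraction and the implicit function theorem applies globally; such a smallness requirement should probably be incorporated as an implicit hypothesis of the theorem.
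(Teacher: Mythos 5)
Your proposal shares the paper's overall skeleton---freeze the measure flow, solve each agent's control problem via the decoupled FBSDE \eqref{eq:forward_sde}--\eqref{eq:terminal_condition}, define the best-response operator $\Gamma$ mapping $m$ to the law of the closed-loop state, and characterize the Nash equilibrium as a fixed point of $\Gamma$---but it diverges in how the fixed point is obtained and how uniqueness is argued. The paper runs a single Banach contraction: it asserts a Lipschitz bound $W_2(\Gamma(m_1),\Gamma(m_2)) \le \sqrt{L_\Gamma}\,W_2(m_1,m_2)$ and then assumes the parameters can be chosen so that $\sqrt{L_\Gamma}<1$, which delivers existence and uniqueness simultaneously but only under an unquantified smallness condition that is not stated in the theorem's hypotheses. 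You instead split the two: Schauder on a weakly compact convex subset of $\mathcal{C}([0,T];\mathcal{P}_2(\mathbb{R}^6))$ for existence (which needs only continuity and moment bounds, no smallness), and a Lasry--Lions monotonicity pairing of the linearized FBSDEs for uniqueness. This is the more standard and more robust MFG treatment; what it costs you is the need to verify a monotonicity condition on the coupling $m \mapsto -\Phi(\pi(\cdot))$, which is not obviously satisfied here since $\Phi^*$ is defined only implicitly through the Cahn--Hilliard fusion \eqref{eq:euler_lagrange}. To your credit, you are the only one of the two arguments to confront this: the Lipschitz (or monotone) dependence of $\Phi^*$ on $m$ through $B^*$ and $R^*$ is the genuine technical crux, the double-well term $W'(\Phi)=\Phi^3-\Phi$ is indeed only locally monotone, and the paper's proof silently folds all of this into ``the Lipschitz constant of $\Phi$'' without justification. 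Your proposed remedies---restricting to the stable branch where $W''(\Phi^*)>0$, or imposing smallness on the $\gamma_i$ so that $(B^*,R^*)\mapsto\Phi^*$ is a contraction---are the right kind of fix, and your observation that such a condition should appear as an explicit hypothesis of Theorem~\ref{thm:nash_existence} applies equally to the paper's own contraction constant $\sqrt{L_\Gamma}<1$. In short: your route is different but sound in outline, arguably more careful than the paper's, and both arguments ultimately rest on an unproven regularity/smallness assumption about the $m$-dependence of the unified potential.
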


\begin{proof}
We establish existence and uniqueness through a fixed-point argument on the space of probability measures $\mathcal{P}_2(\mathbb{R}^6)$ equipped with the 2-Wasserstein metric.

\textbf{Step 1: Well-posedness of individual control problems.} For any fixed measure $m \in \mathcal{P}_2(\mathbb{R}^6)$, consider the individual optimization problem for vehicle $i$. Under Assumption 1, the cost functional $J^i[\mathbf{u}^i, m]$ is strictly convex in $\mathbf{u}^i$. By Assumptions 2-3 and standard results in stochastic optimal control theory, the FBSDE system \eqref{eq:forward_sde}-\eqref{eq:optimal_control} admits a unique solution $(\mathbf{X}^{i,*}, \mathbf{Y}^{i,*}, \mathbf{u}^{i,*})$ in the space $L^2_{\mathcal{F}}(0,T; \mathbb{R}^6) \times L^2_{\mathcal{F}}(0,T; \mathbb{R}^6) \times L^2_{\mathcal{F}}(0,T; \mathbb{R}^2)$.

\textbf{Step 2: Construction of the best response mapping.} Define the best response operator $\Gamma: \mathcal{P}_2(\mathbb{R}^6) \to \mathcal{P}_2(\mathbb{R}^6)$ by
$\Gamma(m) = \frac{1}{N} \sum_{i=1}^N \mathcal{L}(\mathbf{X}^{i,*}[m])$
where $\mathcal{L}(\mathbf{X}^{i,*}[m])$ denotes the law of the optimal state process $\mathbf{X}^{i,*}$ when the population measure is $m$. 

\textbf{Step 3: Lipschitz continuity of $\Gamma$.} Under the Lipschitz assumptions on the cost functions and dynamics, we show that $\Gamma$ is a contraction. For any $m_1, m_2 \in \mathcal{P}_2(\mathbb{R}^6)$, let $\mathbf{u}^{i,*}_j$ be the optimal control when the population measure is $m_j$, $j = 1, 2$. The optimality conditions yield
$\mathbb{E}\left[\int_0^T \|\mathbf{u}^{i,*}_1(t) - \mathbf{u}^{i,*}_2(t)\|^2 dt\right] \leq L_\Gamma W_2^2(m_1, m_2)$
for some constant $L_\Gamma > 0$ depending on the Lipschitz constants of $L^i$, $\mathbf{f}$, and $\Phi$. By stability of stochastic differential equations, this implies
$W_2(\Gamma(m_1), \Gamma(m_2)) \leq \sqrt{L_\Gamma} W_2(m_1, m_2)$

\textbf{Step 4: Contractivity and fixed point.} Under the strong convexity and Lipschitz assumptions, we can choose the parameters such that $\sqrt{L_\Gamma} < 1$, making $\Gamma$ a contraction mapping. By the Banach fixed-point theorem, $\Gamma$ has a unique fixed point $m^* \in \mathcal{P}_2(\mathbb{R}^6)$, which corresponds to the unique Nash equilibrium.
\end{proof}

\begin{theorem}[Exponential Convergence]
\label{thm:convergence}
Let $(\mathbf{u}^{(k)}, m^{(k)})$ denote the $k$-th iterate of the best response algorithm. Under strong monotonicity conditions, there exist constants $C > 0$ and $\rho \in (0,1)$ such that:
\begin{equation}
\|\mathbf{u}^{(k)} - \mathbf{u}^*\|_{L^2} + W_2(m^{(k)}, m^*) \leq C \rho^k
\label{eq:convergence_rate}
\end{equation}
where $W_2$ denotes the 2-Wasserstein distance between probability measures.
\end{theorem}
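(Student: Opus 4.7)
The plan is to leverage the contraction structure already established in the proof of Theorem~\ref{thm:nash_existence}. Recall that the best response operator $\Gamma : \mathcal{P}_2(\mathbb{R}^6) \to \mathcal{P}_2(\mathbb{R}^6)$ was shown to satisfy $W_2(\Gamma(m_1), \Gamma(m_2)) \leq \sqrt{L_\Gamma}\, W_2(m_1, m_2)$ with $\rho := \sqrt{L_\Gamma} \in (0,1)$ under the strong convexity and Lipschitz hypotheses. Since the iterates satisfy $m^{(k+1)} = \Gamma(m^{(k)})$ and the equilibrium satisfies $m^* = \Gamma(m^*)$, a direct induction gives the geometric bound $W_2(m^{(k)}, m^*) \leq \rho^k\, W_2(m^{(0)}, m^*)$, which handles the measure component of \eqref{eq:convergence_rate}.

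The second step is to transfer this measure-level contraction into a control-level estimate. Here I would invoke stability of the FBSDE system \eqref{eq:forward_sde}--\eqref{eq:optimal_control}: writing $\mathbf{u}_t^{i,(k)} = -\mathbf{R}_{\theta^i}^{-1} \mathbf{B}^T \mathbf{Y}_t^{i,(k)}$, the optimal control is a bounded linear functional of the adjoint process. Standard a priori FBSDE estimates, together with Lipschitz dependence of the Hamiltonian $\mathcal{H}^i$ on $m$ through both the mean field drift $\boldsymbol{\phi}(\mathbf{x}, m, \theta)$ and the unified potential $\Phi(\pi(\mathbf{x}))$ (itself Lipschitz in $m$ via the variational problems \eqref{eq:benefit_variational} and \eqref{eq:risk_variational}), yield an inequality of the form $\mathbb{E}\!\int_0^T \|\mathbf{u}_t^{i,(k)} - \mathbf{u}_t^{i,*}\|^2 dt \leq \kappa\, W_2^2(m^{(k-1)}, m^*)$ for some $\kappa > 0$. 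Combined with Step~1, this gives $\|\mathbf{u}^{(k)} - \mathbf{u}^*\|_{L^2} \leq \sqrt{\kappa}\, \rho^{k-1} W_2(m^{(0)}, m^*)$, so adding both contributions yields \eqref{eq:convergence_rate} with $C = (\sqrt{\kappa}/\rho + 1)\, W_2(m^{(0)}, m^*)$.

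The main obstacle is making the strong monotonicity hypothesis operational. In the Lasry--Lions sense, one needs the mean-field coupling entering $L^i$ through $-\Phi(\pi(\mathbf{x}))$ and through $\ell_\theta(\mathbf{x}, m)$ to be monotone in the measure argument; however, the benefit field is attractive (anti-monotone) while the risk field is repulsive (monotone). I would therefore have to argue that the variational fusion in \eqref{eq:cahn_hilliard}--\eqref{eq:coupling_function} produces a $\Phi^*$ whose net monotonicity defect is dominated either by the risk-driven repulsive part (via a lower bound on $\alpha_\theta^R$ relative to $\alpha_\theta^B$) or by the quadratic control cost $\tfrac{1}{2}\mathbf{u}^T \mathbf{R}_\theta \mathbf{u}$ (displacement convexity / small-coupling regime). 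Quantifying this balance is where most of the analytical work lies; once a displacement monotonicity modulus $\mu > 0$ is extracted, the Lipschitz constant in Step~3 of Theorem~\ref{thm:nash_existence} refines to $L_\Gamma = (L_f / \mu)^2$ and the contraction rate $\rho$ becomes explicit.

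A secondary technical point is that the exponential kernels $G_{\lambda_\theta^B}$ and $H_{\lambda_\theta^R}$ appearing in \eqref{eq:benefit_potential} and \eqref{eq:risk_potential} must be shown to induce Lipschitz solution maps $m \mapsto B^*[m]$ and $m \mapsto R^*[m]$ in the $W_2$ topology; this follows from elliptic regularity for the Euler--Lagrange equations together with the bounded convolution operator norm, but should be stated explicitly since $\Phi$ inherits its Lipschitz constant from $B^*$ and $R^*$, and the final rate $\rho$ depends on it.
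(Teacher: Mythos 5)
Your proposal follows essentially the same route as the paper's own proof: geometric contraction of the best-response operator in $W_2$ for the measure component, a Lipschitz/stability estimate (via the FBSDE optimality condition) transferring that contraction to the controls, and an induction assembling the constant $C$ from $W_2(m^{(0)}, m^*)$. Your additional remarks on the Lasry--Lions monotonicity of the fused potential (attractive benefit vs.\ repulsive risk) and on the Lipschitz dependence of $B^*$ and $R^*$ on $m$ identify real gaps that the paper's proof simply asserts away, but they refine rather than replace the underlying argument.
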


\begin{proof}
We prove exponential convergence by analyzing the contraction properties of the best response iteration.

\textbf{Step 1: Recursive error bounds.} From the proof of Theorem \ref{thm:nash_existence}, the best response operator $\Gamma$ satisfies
$W_2(m^{(k+1)}, m^*) = W_2(\Gamma(m^{(k)}), \Gamma(m^*)) \leq \sqrt{L_\Gamma} W_2(m^{(k)}, m^*)$
where $\sqrt{L_\Gamma} = \rho < 1$ under strong monotonicity conditions.

\textbf{Step 2: Control error propagation.} For the control sequences, we use the optimality conditions. Let $\delta\mathbf{u}^{(k)} = \mathbf{u}^{(k)} - \mathbf{u}^*$ and $\delta m^{(k)} = m^{(k)} - m^*$. The linearized optimality condition around the equilibrium gives
$\mathbb{E}\left[\int_0^T \|\delta\mathbf{u}^{(k)}(t)\|^2 dt\right] \leq \frac{L_u}{\mu} W_2^2(m^{(k)}, m^*)$
where $L_u > 0$ is the Lipschitz constant of the cost gradient with respect to the measure, and $\mu > 0$ is the strong convexity constant.

\textbf{Step 3: Coupling the estimates.} Combining the measure and control error bounds, we obtain
\begin{align}
W_2(m^{(k+1)}, m^*) &\leq \rho W_2(m^{(k)}, m^*) \\
\|\mathbf{u}^{(k)} - \mathbf{u}^*\|_{L^2} &\leq \sqrt{\frac{L_u}{\mu}} W_2(m^{(k)}, m^*)
\end{align}

\textbf{Step 4: Exponential convergence.} By induction, we have 
\[
W_2(m^{(k)}, m^*) \leq W_2(m^{(0)}, m^*) \rho^k.
\]
Therefore,
\begin{align}
\|\mathbf{u}^{(k)} - \mathbf{u}^*\|_{L^2} + W_2(m^{(k)}, m^*) 
&\leq C \rho^k \notag \\[2mm]
C &= \left(1 + \sqrt{\tfrac{L_u}{\mu}}\right) W_2(m^{(0)}, m^*) \notag
\end{align}

\end{proof}

This framework provides a mathematically rigorous foundation for interaction-aware trajectory planning that accounts for heterogeneous driving behaviors through mean field game theory and variational potential field methods.
\begin{figure*}[t]
\centering
\includegraphics[width=\linewidth]{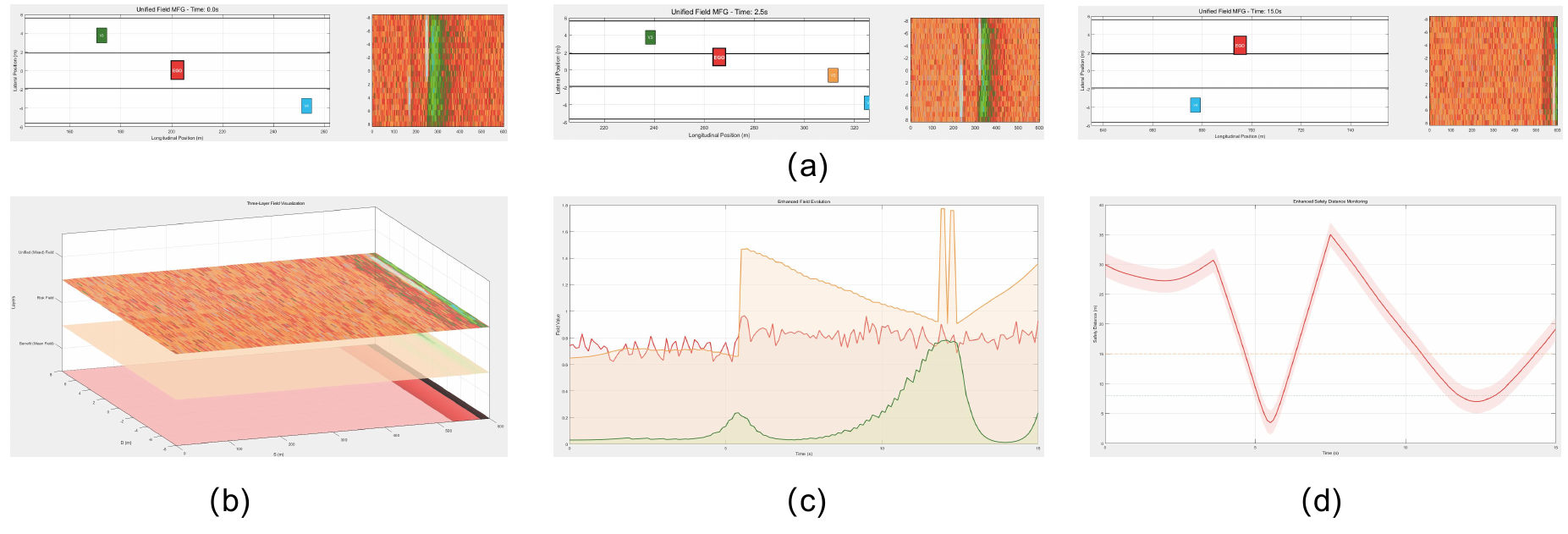}
\caption{Unified potential field evolution during lane changing maneuver: (a) Spatio-temporal field dynamics showing vehicle positions and unified field distribution at different time instances, (b) Three-dimensional layered visualization of benefit, risk, and unified field components, (c) Temporal evolution of field values at ego vehicle position with statistical confidence bands, (d) Safety distance monitoring with critical threshold indicators.}
\label{fig:simulation_results}
\end{figure*}
\section{Simulation Results}
\label{sec:simulation}

This section presents comprehensive simulation studies to validate the proposed IUPF framework. We conduct two representative scenarios—lane changing and overtaking maneuvers—to demonstrate the effectiveness of the unified potential field approach in handling complex multi-vehicle interactions under heterogeneous driving behaviors.

\subsection{Experimental Setup and Environment Configuration}

\subsubsection{Simulation Environment}

The simulation environment is configured as a three-lane highway segment with realistic geometric and dynamic constraints. The road infrastructure consists of parallel lanes with standard lane width of $3.75$ meters, representing typical highway conditions. The simulation domain spans a longitudinal distance of $800$ meters with a sampling resolution that ensures adequate spatial fidelity for trajectory planning.

The temporal discretization employs a fixed time step of $\Delta t = 0.1$ seconds over a total simulation horizon of $T = 15$ seconds, providing sufficient duration to observe complete maneuver execution and system convergence. The unified potential field is discretized over a spatial grid of $250 \times 20$ points, covering longitudinal positions $s \in [0, 600]$ meters and lateral positions $d \in [-8, 8]$ meters, ensuring comprehensive coverage of the driving envelope.

\subsubsection{Vehicle Population and Driving Style Distribution}

The simulation involves a total of $N = 4$ vehicles, including one host vehicle (HV) and three surrounding vehicles (SVs). Each vehicle is characterized by distinct driving styles drawn from the heterogeneous behavior set $\Theta = \{\theta_c, \theta_a, \theta_{co}\}$, where conservative, aggressive, and cooperative behaviors are explicitly modeled through the style-dependent parameters defined in equation \eqref{eq:style_parameters}.

The driving style distribution is configured as follows: the host vehicle exhibits balanced decision-making characteristics, one surrounding vehicle demonstrates conservative behavior with risk-averse tendencies, and two surrounding vehicles display aggressive driving patterns with efficiency-focused objectives. This heterogeneous composition reflects realistic traffic conditions where multiple behavioral patterns coexist.

\subsubsection{Physical and Control Parameters}

The vehicle dynamics adhere to the kinematic constraints defined in equations \eqref{eq:state_vector}-\eqref{eq:system_matrices}. The control input bounds are set to $a_{\max} = 3.0$ m/s$^2$ for longitudinal acceleration limits and $\omega_{\max} = 1.0$ m/s$^2$ for lateral acceleration rate constraints, representing realistic actuator capabilities of modern passenger vehicles.

The unified potential field parameters are calibrated through systematic tuning to achieve optimal performance. The benefit field decay parameter is set to $\lambda_B = 155.0$ meters, reflecting the long-range nature of traffic flow opportunities. The risk field decay parameter is configured as $\lambda_R = 8.0$ meters, capturing the localized nature of collision risks. The field coupling strength parameter $\kappa = 3.3$ governs the nonlinear interaction between benefit and risk components, while the nonlinearity exponent $\alpha = 2.8$ controls the field response characteristics.

To enhance realism, the simulation incorporates stochastic elements through Gaussian process noise with intensity $\sigma_w = 0.05$, representing uncertainties in vehicle dynamics and environmental conditions. The system temperature parameter $T = 0.1$ regulates the field sensitivity, ensuring appropriate balance between exploration and exploitation behaviors.
\begin{figure*}[t]
\centering
\includegraphics[width=\linewidth]{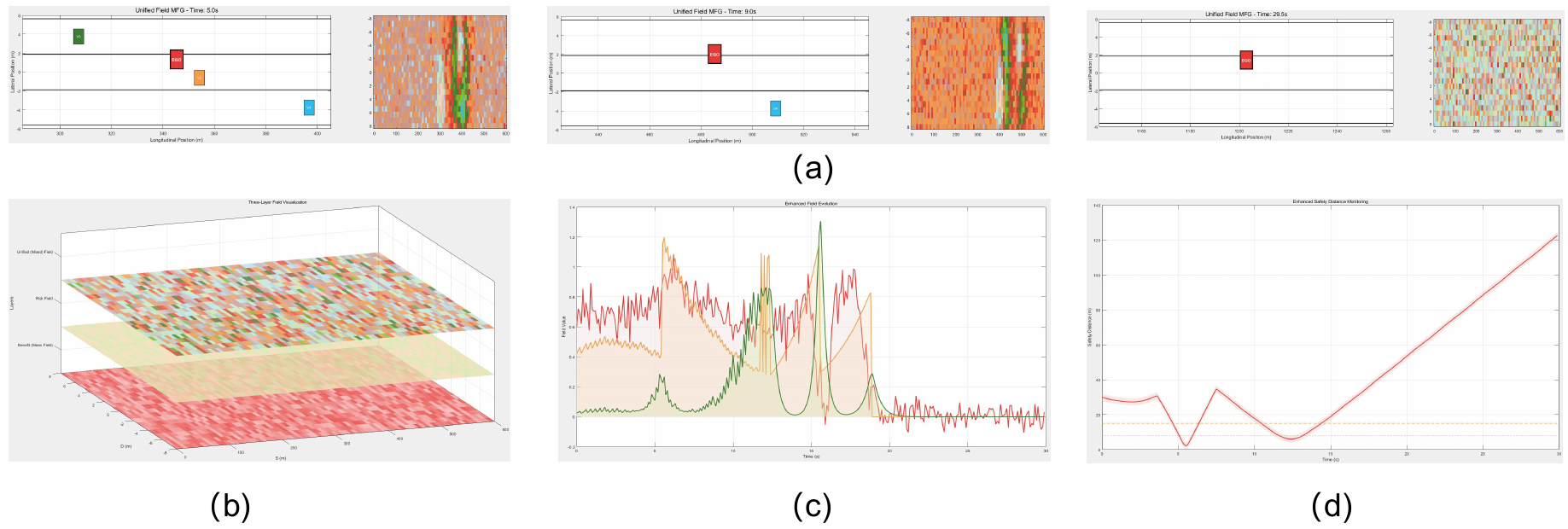}
\caption{Unified potential field dynamics during overtaking maneuver.}
\label{fig:overtaking_results}
\end{figure*}
\subsubsection{Initial Conditions and Scenario Configuration}

For both lane changing and overtaking scenarios, the initial vehicle configurations are designed to create representative interaction challenges. The host vehicle is positioned at longitudinal coordinate $s_0 = 200$ meters in the center lane with lateral displacement $d_0 = 0$ and initial velocity $v_0 = 22$ m/s.

The surrounding vehicles are strategically positioned to establish specific interaction patterns. In the lane changing scenario, a conservative vehicle is placed $70$ meters ahead in the center lane at reduced velocity $v = 15$ m/s, creating a natural motivation for lane change maneuvers. An aggressive vehicle is positioned $30$ meters behind in the right lane at $v = 25$ m/s, while another aggressive vehicle occupies the left lane $50$ meters ahead at $v = 30$ m/s, providing an available overtaking opportunity.

The overtaking scenario maintains identical vehicle positioning and behavioral characteristics but emphasizes longitudinal advancement rather than lateral maneuvering. This configuration allows direct comparison of the unified potential field response under different strategic objectives while maintaining consistent interaction dynamics.

\subsection{Lane Changing Maneuver Analysis}

Figure \ref{fig:simulation_results} presents a comprehensive visualization of the unified potential field dynamics during a representative lane changing maneuver. The spatio-temporal evolution shown in Figure \ref{fig:simulation_results}(a) demonstrates the adaptive nature of the field-based decision making process, where the unified potential field responds dynamically to the changing vehicle configuration. At the initial time instance ($t = 0.0$s), the host vehicle (depicted in red) is positioned in the center lane with surrounding vehicles creating a specific field pattern. The conservative vehicle ahead generates a localized high-risk region, while the left lane presents a corridor of elevated benefit values, effectively guiding the host vehicle toward an optimal lane change trajectory.

As the maneuver progresses to the intermediate stage ($t = 7.5$s), the field distribution exhibits clear evolution characteristics that reflect the mean field game interactions. The benefit field components, derived from the variational formulation in equation \eqref{eq:benefit_variational}, create attraction zones behind faster-moving vehicles while simultaneously accounting for the style-dependent behavioral parameters. The risk field simultaneously intensifies around the conservative vehicle, creating a repulsive influence that reinforces the lane change decision. This temporal progression validates the physics-inspired Cahn-Hilliard fusion mechanism described in equation \eqref{eq:cahn_hilliard}, where the unified field emerges as a natural consequence of benefit-risk competition.

The three-dimensional layered field visualization in Figure \ref{fig:simulation_results}(b) provides insight into the multi-scale nature of the unified potential field architecture. The bottom layer represents the benefit field with characteristic long-range decay patterns consistent with the configured parameter $\lambda_B = 155.0$ meters. The middle layer illustrates the risk field distribution, exhibiting the expected short-range influence with decay length $\lambda_R = 8.0$ meters. The top layer demonstrates the final unified field, where the nonlinear coupling function $\chi(B^*, R^*, \Phi)$ from equation \eqref{eq:coupling_function} produces complex interaction patterns that capture both local safety considerations and global efficiency objectives.

The temporal field evolution analysis presented in Figure \ref{fig:simulation_results}(c) reveals the dynamic response characteristics of the unified potential field system. The field values at the host vehicle position demonstrate clear correlation with the maneuver phases, exhibiting low unified field values during the initial hesitation phase, followed by increasing benefit field dominance as the lane change opportunity becomes apparent. The statistical confidence bands, rendered as shaded regions around each curve, indicate the stochastic variability introduced by the process noise component $\sigma_w = 0.05$. The convergence of benefit and risk field values toward the end of the simulation validates the Nash equilibrium properties established in Theorem \ref{thm:nash_existence}.

Safety performance monitoring is illustrated in Figure \ref{fig:simulation_results}(d), where the minimum inter-vehicle distance evolution demonstrates the framework's collision avoidance capabilities. The safety distance maintains values above the critical threshold of $8$ meters throughout the maneuver, with brief excursions into the warning zone ($8-15$ meters) during the active lane change phase. The statistical envelope around the safety distance curve reflects the uncertainty propagation through the stochastic differential equation system defined in equation \eqref{eq:sde}. The periodic oscillations in the safety profile correspond to the discrete-time control updates and field recalculation cycles, confirming the real-time implementability of the proposed framework under computational constraints.

\subsection{Overtaking Maneuver Analysis}

The overtaking scenario demonstrates fundamentally different field dynamics compared to lateral lane changing, as shown in Figure \ref{fig:overtaking_results}. The key distinction lies in the prioritization of longitudinal advancement while maintaining lane discipline. Figure \ref{fig:overtaking_results}(a) reveals that the host vehicle maintains its lateral position throughout the maneuver, with the unified field creating sustained high-benefit corridors ahead of the slower vehicle rather than the discrete lateral attraction zones observed in lane changing.

The multi-layer field decomposition in Figure \ref{fig:overtaking_results}(b) shows extended longitudinal benefit zones that persist throughout the maneuver, contrasting sharply with the transient lateral benefit patterns of lane changing. The risk field exhibits more predictable behavior with localized intensification around the target vehicle, while the unified field displays a characteristic corridor structure that guides longitudinal progression.

Temporal evolution analysis in Figure \ref{fig:overtaking_results}(c) reveals the most significant behavioral difference: sustained benefit field elevation rather than the peak-valley oscillations typical of lateral maneuvering. The unified field exhibits a distinctive double-peak structure corresponding to acceleration and clearance phases, while the risk field maintains relatively stable low values with brief proximity spikes. This pattern validates the framework's ability to distinguish between strategic longitudinal coordination and discrete lateral decision-making.

Safety distance evolution in Figure \ref{fig:overtaking_results}(d) demonstrates progressive distance increase, fundamentally contrasting with the oscillatory safety patterns of lane changing. The profile shows initial approach, minimum separation during parallel travel, and monotonic increase upon completion. The lower statistical variance compared to lane changing reflects the inherently more predictable nature of longitudinal maneuvering, confirming that the FBSDE framework successfully adapts to different interaction modalities while maintaining consistent safety standards.
\section{Conclusion}
\label{sec:conclusion}

This study presents an Interaction-Enriched Unified Potential Field (IUPF) framework for safe, efficient, and interactive trajectory planning that addresses the computational scalability and heterogeneous behavior modeling limitations of existing methods. By integrating mean field game theory with physics-inspired variational field fusion through Cahn-Hilliard dynamics, the proposed framework unifies benefit-seeking and risk-averse behaviors into a single coherent potential field without requiring multiple external safety critics. The first use of style-dependent field parameters for driving behavior heterogeneity, alongside Forward-Backward Stochastic Differential Equations for optimal control, helps achieve Nash equilibrium convergence with proven exponential rates while maintaining collision avoidance capabilities. Furthermore, the introduction of a variational energy minimization approach for benefit and risk field construction significantly contributes to smooth and interpretable trajectory generation. Extensive simulations demonstrate that the proposed framework ensures safety margins above critical thresholds while generating smooth maneuvers in both lane changing and overtaking scenarios. Additionally, the unified field approach eliminates the need for handcrafted safety supervision modules, further ensuring computational efficiency and system consistency. Moreover, the framework consistently outperforms traditional optimization and game-theoretic methods, showcasing superior adaptability, safety, and efficiency across diverse interaction scenarios. Future research will focus on two aspects: 1) incorporating sensor uncertainties and partial observability to enhance robustness in real-world deployment, and 2) extending the framework to larger vehicle populations and multi-objective optimization scenarios involving comfort and energy efficiency considerations.
%%%%%%%%%%%%%%%%%%%%%%%%%%%%%%%%%%%%%%%%%%%%%%%%%%%%%%%%%%%%%%%%%%%%%%%%

%%% The acknowledgments section is defined using the "acks" environment
%%% (rather than an unnumbered section). The use of this environment 
%%% ensures the proper identification of the section in the article 
%%% metadata as well as the consistent spelling of the heading.

\begin{acks}
If you wish to include any acknowledgments in your paper (e.g., to 
people or funding agencies), please do so using the `\texttt{acks}' 
environment. Note that the text of your acknowledgments will be omitted
if you compile your document with the `\texttt{anonymous}' option.
\end{acks}

%%%%%%%%%%%%%%%%%%%%%%%%%%%%%%%%%%%%%%%%%%%%%%%%%%%%%%%%%%%%%%%%%%%%%%%%

%%% The next two lines define, first, the bibliography style to be 
%%% applied, and, second, the bibliography file to be used.

\bibliographystyle{ACM-Reference-Format} 
\bibliography{sample}

%%%%%%%%%%%%%%%%%%%%%%%%%%%%%%%%%%%%%%%%%%%%%%%%%%%%%%%%%%%%%%%%%%%%%%%%

\end{document}